\definecolor{citecolor}{HTML}{0071BC}
\definecolor{linkcolor}{HTML}{ED1C24}
\DeclareRobustCommand{\mathbf}[1]{\bm{#1}}
\newcommand{\bx}{\mathbf{x}}
\newcommand{\by}{\mathbf{y}}
\newcommand{\Dbase}{\mathcal{D}_{\text{base}}}
\newcommand{\Dnovel}{\mathcal{D}_{\text{novel}}}
\title{DiffKendall: A Novel Approach for Few-Shot Learning\\ with Differentiable Kendall's Rank Correlation}
\author{%
    \textbf{Kaipeng Zheng$^1$
    \quad
    Huishuai Zhang$^2$
    \quad
    Weiran Huang$^{1,}$\thanks{Correspondence to Weiran Huang.}}\\[0.3cm]
    $^1$ Qing Yuan Research Institute, SEIEE, Shanghai Jiao Tong University\\[0.1cm]
    $^2$ Microsoft Research Asia\\[0.3cm]
    \texttt{kaipengm2@gmail.com, huzhang@microsoft.com, weiran.huang@outlook.com}
}
\begin{document}

\maketitle

\begin{abstract}
    Few-shot learning aims to adapt models trained on the base dataset to novel tasks where the categories were not seen by the model before. 
    This often leads to a relatively uniform distribution of feature values across channels on novel classes,
    posing challenges in determining channel importance for novel tasks.
    Standard few-shot learning methods employ geometric similarity metrics such as cosine similarity and negative Euclidean distance to gauge the semantic relatedness between two features. 
    However, features with high geometric similarities may carry distinct semantics, especially in the context of few-shot learning.
    In this paper, we demonstrate that the importance ranking of feature channels is a more reliable indicator for few-shot learning than geometric similarity metrics. 
    We observe that replacing the geometric similarity metric with Kendall’s rank correlation only during inference is able to improve the performance of few-shot learning across a wide range of methods and datasets with different domains. 
    Furthermore, we propose a carefully designed differentiable loss for meta-training to address the non-differentiability issue of Kendall’s rank correlation. 
    By replacing geometric similarity with differentiable Kendall’s rank correlation, our method can integrate with numerous existing few-shot approaches and is ready for integrating with future state-of-the-art methods that rely on geometric similarity metrics.
    Extensive experiments validate the efficacy of the rank-correlation-based approach, showcasing a significant improvement in few-shot learning.

\end{abstract}
\section{Introduction}

Deep learning has achieved remarkable success in various domains. 
However, obtaining an adequate amount of labeled data is essential for attaining good performance.
In many real-world scenarios, obtaining sufficient labeled data can be exceedingly challenging and laborious.
This makes it a major bottleneck in applying deep learning models in real-world applications. 
In contrast, humans are able to quickly adapt to novel tasks 
by leveraging prior knowledge and experience, with only a very small number of samples. 
As a result, few-shot learning has received increasing attention recently.

The goal of few-shot learning is to adapt models trained on the base dataset to novel tasks where only a few labeled data are available. 
Previous works mainly focus on classification tasks and have been extensively devoted to metric-learning-based methods \cite{protonet,relationnet,can,metabaseline}.
In particular, they learn enhanced embeddings of novel samples by sampling tasks with a similar structure to the novel task from a sufficiently labeled base dataset for training. 
Geometric similarity metrics, such as negative Euclidean distance \cite{protonet} and cosine similarity \cite{s2m2,constell,prototypecompletion,metabaseline}, are commonly utilized to determine the semantic similarities between feature embeddings. 
Recent studies \cite{Chen,baseline20,simple_shot} also show that simply employing cosine similarity instead of inner product in linear classifier also yields competitive performance of novel tasks, when the model is pre-trained on the base dataset.

Since the categories of the base class and novel class are disjoint in few-shot learning, there exists a significant gap between training and inference for the model.
We found that compared to base classes, when the feature extractor faces a novel class that is unseen before, the feature channel values become more concentrated, i.e., for a novel class, most non-core features' channels have small and closely clustered values in the range [0.1, 0.3] (see Figure~\ref{fig:small value}(a)).
This phenomenon occurs because the model is trained on the base data, and consequently exhibits reduced variation of feature values when dealing with novel data. 
Empirically, we additionally compare the variance of feature channel values between the base dataset and various novel datasets (see Figure~\ref{fig:small value}(b)). The results reveal a significantly smaller variance in feature channel values in the novel datasets compared to the base dataset.
A smaller variance means values are closer to each other, demonstrating that this is a universally valid conclusion.
This situation creates a challenge in employing geometric similarity to accurately distinguish the importance among non-core feature channels.
To provide a concrete example, consider distinguishing between dogs and wolves. While they share nearly identical core visual features, minor features play a vital role in differentiating them. Suppose the core feature, two minor features are represented by channels 1, 2, and 3, respectively, in the feature vector. A dog prototype may have feature (1, 0.28, 0.2), and a wolf prototype may have feature (1, 0.25, 0.28). Now, for a test image with feature (0.8, 0.27, 0.22), it appears more dog-like, as the 2nd feature is more prominent than the 3rd. 
However, cosine distance struggles to distinguish them clearly, misleadingly placing this test image closer to the wolf prototype (distance=0.0031) rather than the dog prototype (distance=0.0034).
Contrastingly, the importance ranking of feature channels is able to distinguish dogs and wolves.
The test image shares the same channel ranking (1, 2, 3) as the dog prototype, whereas the wolf prototype's channel ranking is (1, 3, 2).

\begin{figure}[t]
    \begin{minipage}[t]{0.5\linewidth}
        \centering
        \includegraphics[width=\textwidth]{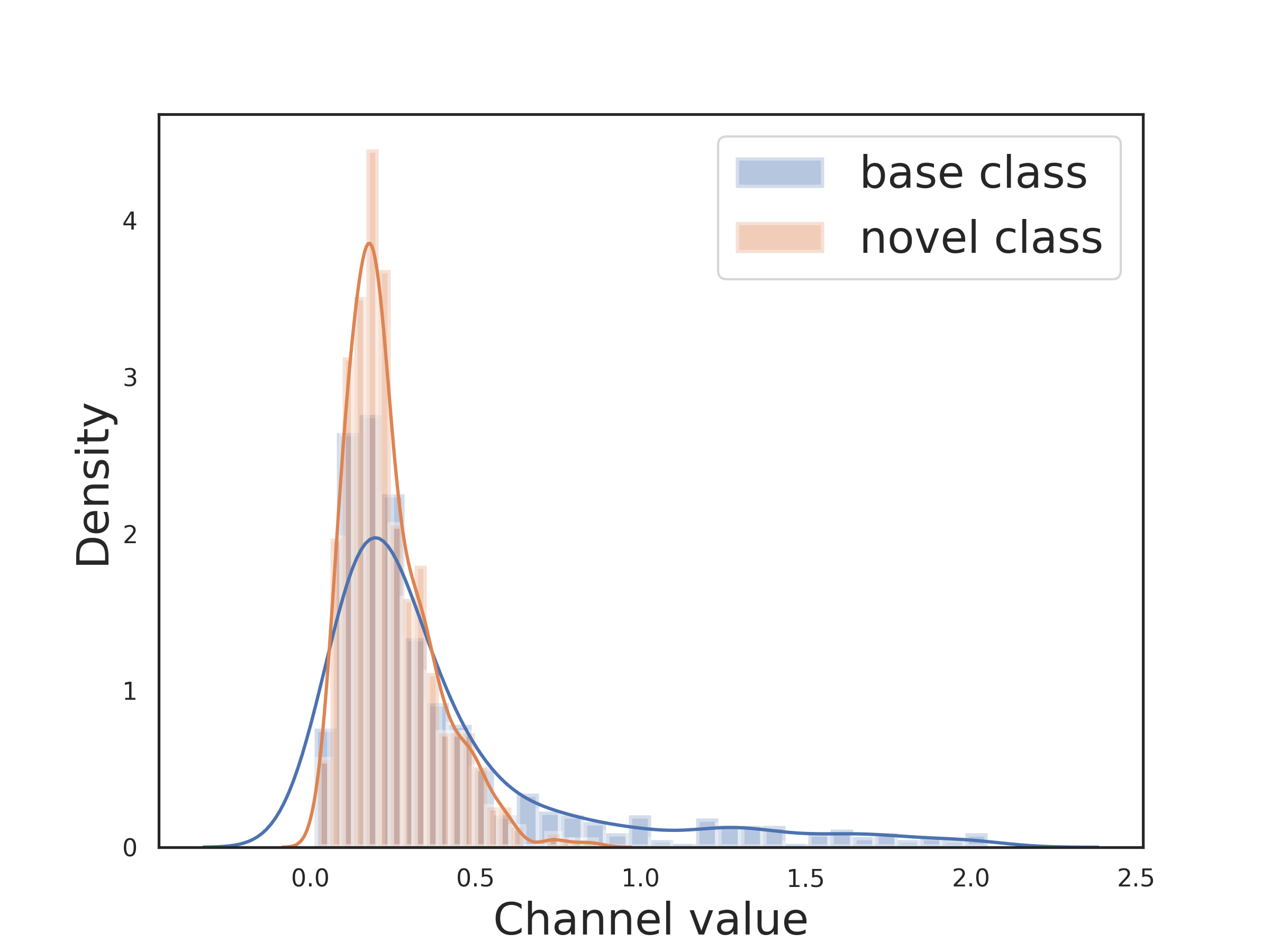}
        \centerline{(a)}
    \end{minipage}%
    \hfill
    \begin{minipage}[t]{0.46\linewidth}
        \centering
        \includegraphics[
        width=\textwidth]{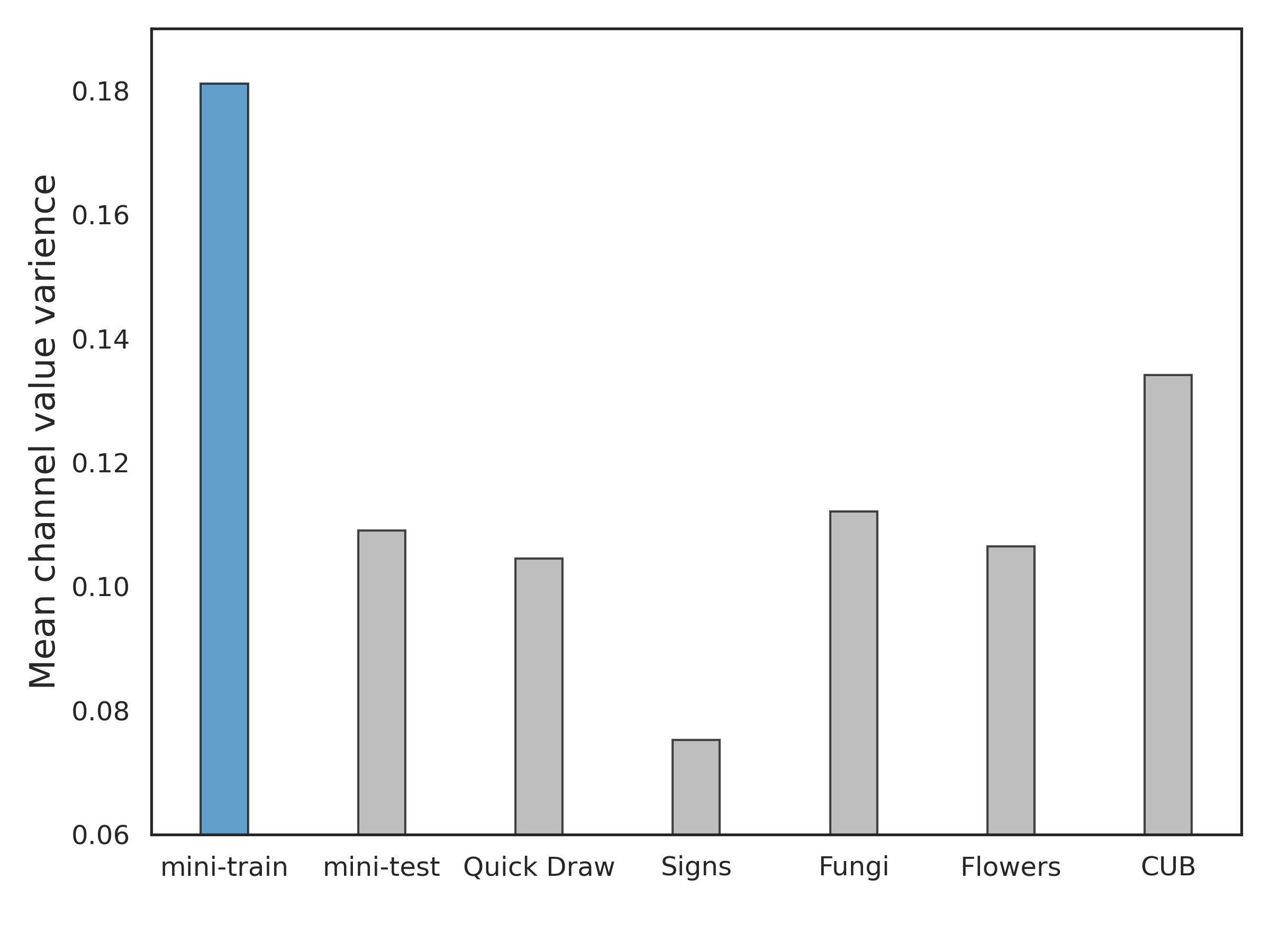}
        \centerline{(b)}
    \end{minipage}
    \caption{(a) Distribution of the feature channel values for a base class and a novel class on mini-ImageNet. The novel class is previously unknown for the model, where a clear difference can be observed from the base class. (b) Comparison of the Mean Variance of Feature Channel Values between the base dataset (mini-train) and various distinct new datasets. It is observed that the variance of feature channel values on the new datasets is consistently much lower than that on the base dataset.}
\label{fig:small value} \end{figure}

Motivated by the above observations, we aim to boost the few-shot learning based on the importance ranking of feature channels in this paper.
Specifically, we propose a simple and effective method, 
which replaces the commonly-used geometric similarity metric (e.g., cosine similarity) with Kendall's rank correlation to determine how closely two feature embeddings are semantically related. 
We demonstrate that using Kendall’s rank correlation at test time can lead to performance improvements on a wide range of datasets with different domains. 
Furthermore, we investigate the potential benefits of employing Kendall’s rank correlation in episodic training. 
One main challenge is that the calculation of channel importance ranking is non-differentiable, which prevents us from directly optimizing Kendall’s rank correlation for training. To address this issue, we propose a smooth approximation of Kendall’s rank correlation and hence make it differentiable. We verify that using the proposed differentiable Kendall’s rank correlation at the meta-training stage instead of geometric similarity achieves further performance improvements.

In summary, our contributions to this paper are threefold as follows:
1) We reveal an intrinsic property of novel sample features in few-shot learning, whereby the vast majority of feature values are closely distributed across channels, leading to difficulty in distinguishing their importance;
2) We demonstrate that the importance ranking of feature channels can be used as a better indicator of semantic correlation in few-shot learning. By replacing the geometric similarity metric with Kendall's rank correlation at test time, significant improvements can be observed in multiple few-shot learning methods on a wide range of popular benchmarks with different domains;
3) We propose a differentiable loss function by approximating Kendall's rank correlation with a smooth version. This enables Kendall's rank correlation to be directly optimized. We verify that further improvements can be achieved by using the proposed differentiable Kendall’s rank correlation at the meta-training stage instead of geometric similarity.

\section{Related Works}
\textbf{Meta-Learning-Based Few-Shot Learning.} Previous research on few-shot learning has been extensively devoted to meta-learning-based methods. They can be further divided into optimization-based methods \cite{maml,metaopt,leo} and metric learning-based methods \cite{protonet,tadam,relationnet,can,metabaseline}, with metric learning-based methods accounting for the majority of them. ProtoNets \cite{protonet} proposes to compute the Euclidean distance between the embedding of query samples and the prototypes on the support set for nearest-neighbor classification. Meta-baseline \cite{metabaseline} uses cosine similarity instead and proposes a two-stage training paradigm consisting of pre-training and episodic training, achieving competitive performance compared to state-of-the-art methods. Recent research \cite{knn,can} has begun to focus on aligning support and query samples by exploiting the local information on feature maps. \citet{knn} represent support samples with a set of local features and use a $k$-nearest-neighbor classifier to classify query samples. CAN \cite{can} uses cross-attention to compute the similarity score between the feature maps of support samples and query samples. ConstellationNet \cite{constell} proposes to learn enhanced local features by constellation models and then aggregate the features through self-attention blocks. Several studies \cite{bias1,prototypecompletion} investigate methods for calibrating novel sample features in few-shot learning. \citet{bias1} learn a transformation that moves the embedding of novel samples towards the class prototype by episodic training. \citet{prototypecompletion} propose to use additional attribute annotations that are generalizable across classes to complete the class prototypes. However, this requires additional labeling costs and the common properties will not exist when the categories of base class data and novel class data are significantly different. Unlike previous studies, we explore determining semantic similarities between novel sample features with the correlation of channel importance ranking in few-shot learning, which has never been studied before.

\textbf{Transfer-Learning-Based Few-Shot Learning.} Transfer-learning-based few-shot learning methods have recently received increasingly widespread attention. Prior research \cite{Chen,baseline20,simple_shot} has verified that competitive performance in few-shot learning can be achieved by pre-training models on the base dataset with cross-entropy loss and using cosine similarity for classification on novel tasks, without relying on elaborate meta-learning frameworks. Recent studies \cite{associate,distractoriccv,distractornips} also focus on introducing data from the pre-training phase to assist in test-time fine-tuning. For instance, \citet{associate} propose to introduce samples from the base dataset in fine-tuning that are similar to the novel samples in the feature space. POODLE \cite{distractornips}, on the other hand, uses the samples in the base dataset as negative samples and proposes to pull the embedding of novel samples away from them during fine-tuning.

\section{Problem Definition}
Typically, a few-shot task $\mathcal{T}$ consists of a support set $\mathcal{S}$ and a query set $\mathcal{Q}$. 
The objective of the few-shot task is to accurately predict the category for each query sample $x_i \in \mathcal{Q}$ based on the support set $\mathcal{S}$.
The support set $\mathcal{S}$ is commonly organized as $N$-way $K$-shot, which means that there are a total of $N$ categories of samples included in this task, with each class containing $K$ annotated samples. 
In few-shot scenarios, $K$ is usually a very small number, indicating that the number of samples available for each category is extremely small. 

It is infeasible to train a feature extractor $f_{\theta}$ on a few-shot task directly from scratch. 
Thus, the feature extractor $f_{\theta}$ is usually trained on a base dataset $\Dbase$ with sufficient annotation to learn a prior.  
The training process typically involves pre-training and episodic training.
After that, few-shot tasks are sampled on the novel dataset $\Dnovel$ for performance evaluation. 
The categories of the samples in $\Dbase$ are entirely distinct from those in $\Dnovel$.
The trained feature extractor $f_{\theta}$ is then used to obtain the embedding of the samples for novel tasks, followed by calculating the similarity between the embedding of support samples and query samples for a nearest-neighbor classification, namely,
\begin{align}
P(y = k \mid \bx) = \frac{\exp( \text{sim}(f_{\theta}(\bx), \bm{c}_k)\cdot t  )}{\sum_{j=1}^{N} \exp( \text{sim}(f_{\theta}(\bx), \bm{c}_j)\cdot t  )}, \label{eq1}
\end{align}
where $\bx \in \mathcal{Q}$ denotes the query sample, $\bm{c}_k$ represents the class prototype, which is usually represented by the mean feature of the support set samples in each category, $\text{sim}(\cdot )$ denotes a similarity measure, $N$ denotes the total number of classes included in the few-shot task, and $t$ is used to perform a scaling transformation on the similarities.

In supervised learning, since there is no distribution gap between the training and test data, we can obtain high-quality embedding of test data by training the model with a large number of in-domain samples. However, in few-shot scenarios, the categories in the training data and the test data have no overlap. 
As there is no direct optimization towards the novel category, the features of novel samples in few-shot learning can be distinct from those learned through conventional supervised learning \cite{bias1,prototypecompletion,channel_imp}.

Geometric similarities are commonly used to determine semantic similarities between features in few-shot learning, among which cosine similarity is widely employed in recent studies \cite{Chen,constell,metabaseline}. 
Given two vectors with the same dimension $\bx = (x_1,...,x_n)$ and $\by = (y_1,...,y_n)$, denoting the features of novel samples, cosine similarity is calculated as $ \frac{1}{\| \bx\| \cdot \| \by \|}\sum_{i=1}^n x_iy_i$. 
It can be seen that the importance of each channel is directly correlated with the numerical feature value $x_i$ and $y_i$. 
However, as shown in Figure \ref{fig:small value}(a), the feature distribution of the novel class samples are largely different from that of the base class samples. 
The values of the feature channels are highly clustered around very small magnitudes, making it difficult for cosine similarity to differentiate their importance in classification. 
Consequently, the classification performance will be dominated by those very few channels with large magnitude values, while the small-valued channels, which occupy the majority of the features, will be underutilized. 
Although the embedding has already been projected onto the unit sphere in cosine similarity to reduce this effect, we verify that the role of the small-valued channels in classification is still largely underestimated.

\section{Warm-Up: Using Kendall's Rank Correlation During Inference}
In this paper, for the first time, we explore the utilization of channel importance ranking in few-shot learning.
Converting numerical differences into ranking differences enables effective discrimination between small-valued channels that exhibit similar values, and reduces the large numerical differences between large-valued and small-valued channels.
To achieve this, we start with investigating the use of Kendall's rank correlation in few-shot learning.
Kendall's rank correlation gauges the semantic similarity between features by assessing how consistently channels are ranked, which aligns precisely with our motivation.
In this section, we demonstrate that leveraging Kendall's rank correlation simply during the inference stage can lead to a significant performance improvement.
\subsection{Kendall's Rank Correlation}
Given two $n$-dimensional feature vectors $\bx = (x_1,...,x_n)$, $\by = (y_1,...,y_n)$, Kendall's rank correlation is determined by measuring the consistency of pairwise rankings for every channel pair $(x_i, x_j)$ and $(y_i,y_j)$. This coefficient can be defined as the disparity between the number of channel pairs $(x_i, x_j)$ and $(y_i,y_j)$ that exhibit concordant ordering versus discordant ordering, namely,
\begin{align}
    \tau (\bx, \by) =\frac{N_{\text{con}}-N_{\text{dis}}}{N_{\text{total}}}, \label{eq2}
\end{align}
where $N_{\text{con}}$ represents the count of channel pairs with consistent importance ranking, i.e., either $(x_i>x_j) \wedge (y_i>y_j)$ or $(x_i<x_j) \wedge (y_i<y_j)$, $N_{\text{dis}}$ reflects the count of channel pairs with inconsistent ordering represented by either $(x_i>x_j) \wedge (y_i<y_j)$ or $(x_i<x_j) \wedge (y_i>y_j)$. $N_{\text{total}}$ represents the total number of channel pairs. 

\subsection{Performace Improvements by Using Kendall's Rank Correlation at Test Time}
We conduct comprehensive experiments and verify that directly using Kendall's rank correlation at test time can significantly improve performance in few-shot learning.

Specifically, recent studies \cite{Chen,baseline20,simple_shot} have confirmed that pre-training the model on the base dataset with cross-entropy loss, and utilizing cosine similarity for classification on novel tasks, can yield competitive performance.
This approach has proven to outperform a number of meta-learning-based methods. 
Therefore, our initial comparison involves assessing the performance of cosine similarity and Kendall's rank correlation when the model is pre-trained with cross-entropy loss (CE) on the base dataset.
The evaluation is conducted on different backbone networks that are commonly used in previous studies, including Conv-4, ResNet-12, ResNet-18, and WRN-28-10. 
In addition, the comparison is also made based on Meta-Baseline (Meta-B) \cite{metabaseline}, a representative meta-learning based approach in few-shot learning, and an advanced method S2M2 \cite{s2m2}. 
We keep all other settings unchanged and replace the originally utilized cosine similarity with Kendall's rank correlation solely during the testing phase. 
Furthermore, we expand our comparison to include a recently proposed method (CIM) \cite{channel_imp}, which suggests a test-time transformation for feature scaling in few-shot learning.
The train set of mini-ImageNet \cite{matching} is used as the base dataset to train the model. 
Since the novel task confronted by the model can be arbitrary in real-world applications, we employ a wide range of datasets for testing that exhibit significant differences in both category and domain, including the test set of mini-ImageNet (mini-test), CUB \cite{cub}, Traffic Signs \cite{traffic_signs}, VGG Flowers \cite{vgg_flower}, Quick Draw \cite{quick_draw} and Fungi \cite{fungi}. 
In the testing phase, we randomly sample 2000 few-shot tasks from the test dataset in the form of 5-way 1-shot and report the average accuracy on all tasks for performance evaluation. 
The results are shown in Table \ref{test compare}. 
It can be seen that simply using Kendall's rank correlation at test time in few-shot learning achieves significant improvements compared with cosine similarity. It also vastly outperforms the latest test-time feature transformation method CIM.

\begin{table}[t]
\renewcommand{\arraystretch}{1.1}
\setlength{\tabcolsep}{3pt}
\centering
\caption{Performance improvements by using Kendall's rank correlation at test time. The training set of mini-ImageNet is used as the base dataset and the average accuracy ($\%$) of randomly sampled 5-way 1-shot tasks on test sets with different domains is reported.}
\label{test compare}
\resizebox{0.9\textwidth}{!}{
\begin{tabular}{cc|cccccc}
\toprule
\multicolumn{1}{c|}{\textbf{Method}} & \textbf{Backbone} & \small \textbf{mini-test} & \small \textbf{CUB}   & \small \textbf{Traffic Signs} & \small \textbf{VGG Flowers} & \small \textbf{Quick Draw} & \small \textbf{Fungi} \\ \midrule[\heavyrulewidth]
\multicolumn{1}{c|}{CE + cosine}       & Conv-4            & 48.57              & 36.97          & 38.89                  & 59.92                & 45.75     & 35.99         \\
\multicolumn{1}{c|}{CE + CIM}          & Conv-4            & 48.94              & 37.41          & \textbf{39.35}         & 59.79                & 45.56   &  35.89           \\
\multicolumn{1}{c|}{CE + Kendall}      & Conv-4            & \textbf{51.50}     & \textbf{39.01} & 39.04                  & \textbf{61.55}       & \textbf{46.00}  & \textbf{36.73}    \\ \midrule
\multicolumn{1}{c|}{CE + cosine}       & ResNet-12         & 62.2               & 45.12          & 55.54                  & 69.41                & 53.73     & 40.68          \\
\multicolumn{1}{c|}{CE + CIM}          & ResNet-12         & 60.4               & 45.20          & 56.64                  & 69.61                & 55.14     & 40.34          \\
\multicolumn{1}{c|}{CE + Kendall}      & ResNet-12         & \textbf{63.3}      & \textbf{47.07} & \textbf{60.84}         & \textbf{71.38}       & \textbf{55.99}   & \textbf{41.68}   \\ \midrule
\multicolumn{1}{c|}{Meta-B + cosine}       & ResNet-12         & 62.84              & 45.38          & 54.88                  & 69.14                & 53.27        & 40.57       \\
\multicolumn{1}{c|}{Meta-B + CIM}          & ResNet-12         & 61.60              & 45.24          & 55.31                  & 68.87                & 54.08    & 40.41           \\
\multicolumn{1}{c|}{Meta-B + Kendall}      & ResNet-12         & \textbf{63.36}     & \textbf{47.15} & \textbf{59.70}         & \textbf{70.57}       & \textbf{55.78}  &  \textbf{41.70}    \\ \midrule
\multicolumn{1}{c|}{CE + cosine}       & ResNet-18         & \textbf{62.92}     & 43.7           & 47.17                  & 62.35                & 52.33     & 38.89          \\
\multicolumn{1}{c|}{CE + CIM}          & ResNet-18         & 61.91              & 43.75          & 47.23                  & 61.89                & 52.21       & 39.07        \\
\multicolumn{1}{c|}{CE + Kendall}      & ResNet-18         & 62.83              & \textbf{45.54} & \textbf{54.32}         & \textbf{67.08}       & \textbf{54.15} & \textbf{39.64}     \\ \midrule
\multicolumn{1}{c|}{CE + cosine}       & WRN-28-10         & 60.08              & 43.64          & 47.01                  &  66.03                    & 47.99         & 39.27      \\
\multicolumn{1}{c|}{CE + CIM}          & WRN-28-10         & 59.34              & 43.43          &    46.30                   &                       64.42 & 48.37    & 39.42           \\
\multicolumn{1}{c|}{CE + Kendall}      & WRN-28-10         & \textbf{61.68}     & \textbf{45.80} & \textbf{51.39}         &  \textbf{69.72}                    & \textbf{53.52}  & \textbf{41.52}             \\ \midrule
\multicolumn{1}{c|}{S2M2 + cosine}     & WRN-28-10         & \textbf{64.52}     & 47.44          & 52.30                  & 68.93                & 51.41     & 41.76          \\
\multicolumn{1}{c|}{S2M2 + CIM}        & WRN-28-10         & 63.60              & 47.59          & 53.84                  & 70.91                & 53.89  & 42.54             \\
\multicolumn{1}{c|}{S2M2 + Kendall}    & WRN-28-10         & 63.97              & \textbf{47.74} & \textbf{57.88}         & \textbf{71.48}       & \textbf{54.63}  & \textbf{43.49}     \\ \midrule
\multicolumn{2}{c|}{Avg Improvements  (Kendall vs. cosine)  }                &     0.92 $\uparrow$                &      1.69 $\uparrow$          &         4.56 $\uparrow$               &       2.67 $\uparrow$              &  2.60 $\uparrow$ &  1.27 $\uparrow$                    \\ 
\multicolumn{2}{c|}{Avg Improvements (Kendall vs. CIM)  }                &     1.81 $\uparrow$               &       1.63 $\uparrow$         &         4.13 $\uparrow$               &       2.88  $\uparrow$             &  1.80 $\uparrow$ & 1.18 $\uparrow$                    \\ 
\bottomrule
\end{tabular}}
\end{table}

\begin{figure}[t]
    \begin{minipage}[t]{0.49\linewidth}
        \centering
        \includegraphics[width=\textwidth]{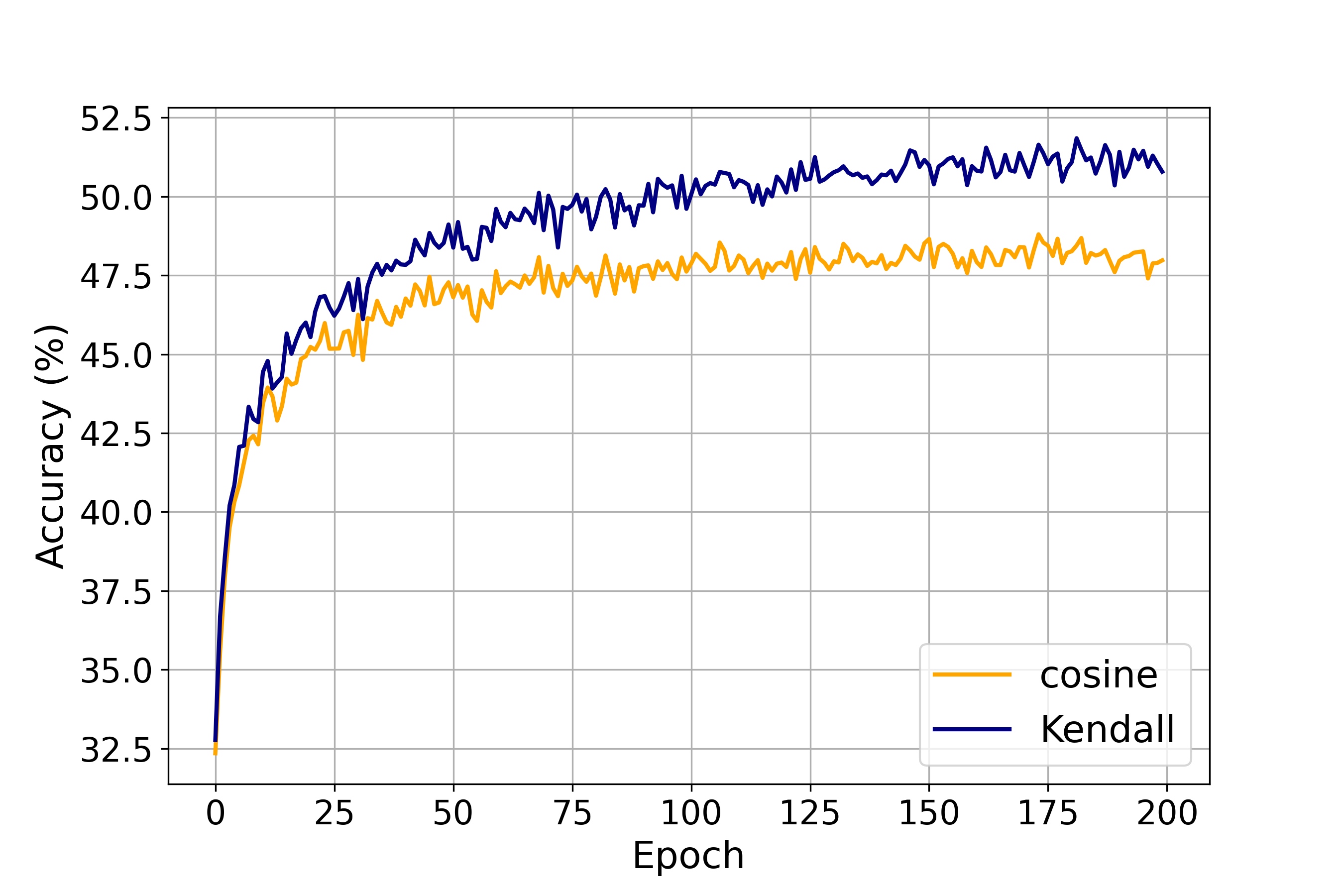}
        \caption{Comparisons between the performance of Kendall's rank correlation and cosine similarity on the test set of mini-ImgeNet under 5-way 1-shot after each epoch, when pre-training a Conv-4 network from scratch.}
        \label{fig:train from scratch} 
    \end{minipage}%
    \hspace{.15in}
    \begin{minipage}[t]{0.49\linewidth}
        \centering
        \includegraphics[width=\textwidth]{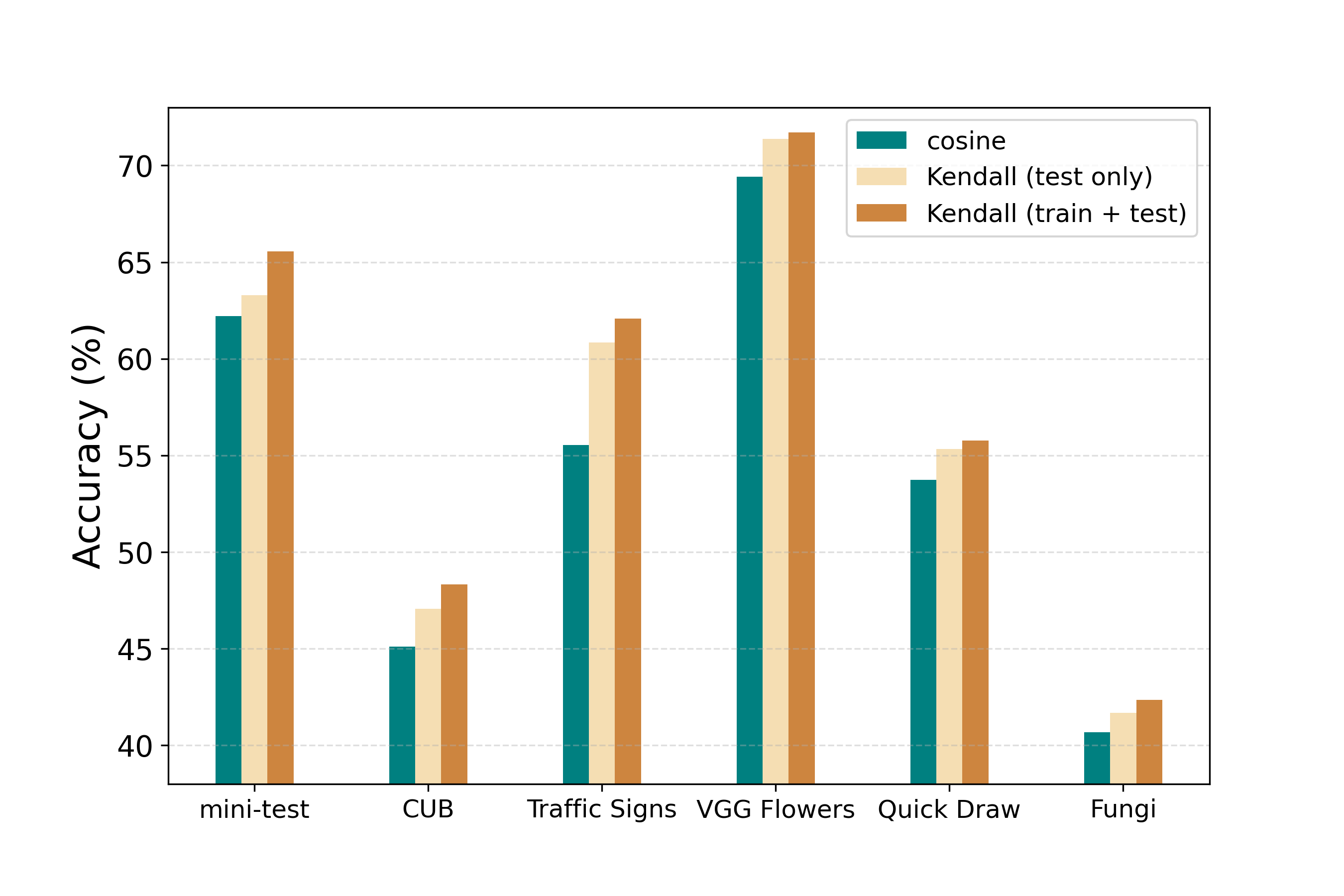}
            \caption{Comparisons between the performance of leveraging Kendall's rank correlation for both training and testing, and for testing only.}
        \label{fig:comparison} 
    \end{minipage}%
\end{figure}

Moreover, we also demonstrate the effectiveness of Kendall's rank correlation in few-shot learning through another closer observation of performance.
Specifically, we pre-train the model on the base dataset from scratch using standard cross-entropy loss and make a comparison between the performance of Kendall's rank correlation and cosine similarity on few-shot tasks after each training epoch. 
The results are shown in Figure \ref{fig:train from scratch}. 
It can be seen that almost throughout the entire training process, Kendall's rank correlation achieves better performance compared to cosine similarity, demonstrating the effectiveness of leveraging channel importance ranking in few-shot learning.

\section{DiffKenall: Learning with Differentiable Kendall's Rank Correlation}
We have shown that the mere utilization of Kendall's rank correlation at test time yields a substantial enhancement in few-shot learning performance.
In this section, we investigate the potential for further improvements through the integration of Kendall's rank correlation into the meta learning process.
The main challenge is that the calculation of channel importance ranking is non-differentiable, which hinders the direct optimization of Kendall's rank correlation for training.
To tackle this problem, in our study, we propose a differentiable Kendall's rank correlation by approximating Kendall's rank correlation with smooth functions, hence enabling optimizing ranking consistency directly in episodic training.

\subsection{A Differentiable Approximation of Kendall's Rank Correlation}

Given two $n$-dimensional vectors $\bx = (x_1,...,x_n)$, $\by = (y_1,...,y_n)$, we define $\Tilde{\tau}_\alpha(\bx,\by)$ as:
\begin{align}
\Tilde{\tau}_\alpha(\bx,\by) =\frac{1}{N_0} \sum_{i=2}^{n}\sum_{j=1}^{i-1}\frac{e^{\alpha(x_i-x_j)}-e^{-\alpha(x_i-x_j)}}{e^{\alpha(x_i-x_j)}+e^{-\alpha(x_i-x_j)}} \frac{e^{\alpha(y_i-y_j)}-e^{-\alpha(y_i-y_j)}}{e^{\alpha(y_i-y_j)}+e^{-\alpha(y_i-y_j)}}, 
\label{dif_k}
\end{align}
where $\alpha > 0$ is a hyperparameter, and $N_0 =\frac{n(n-1)}{2} $ represents the total number of channel pairs.

\begin{restatable}{lemma}{mylemma}
$\Tilde{\tau}_\alpha(\bx,\by)$ is a differentiable approximation of Kendall's rank correlation $\tau(\bx, \by)$,
\begin{align*}
\tau (\bx, \by)=\lim_{\alpha\to +\infty }\Tilde{\tau}_\alpha(\bx,\by).
 \end{align*}
\end{restatable}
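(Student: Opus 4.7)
The plan is to recognize that each fraction appearing in $\Tilde{\tau}_\alpha$ is a hyperbolic tangent and then take the pointwise limit term by term over the finite double sum. Writing $z^{\bx}_{ij} := x_i - x_j$ and $z^{\by}_{ij} := y_i - y_j$, the identity $\frac{e^{\alpha z}-e^{-\alpha z}}{e^{\alpha z}+e^{-\alpha z}} = \tanh(\alpha z)$ lets me rewrite
\[
\Tilde{\tau}_\alpha(\bx,\by) = \frac{1}{N_0}\sum_{i=2}^{n}\sum_{j=1}^{i-1} \tanh(\alpha\, z^{\bx}_{ij})\,\tanh(\alpha\, z^{\by}_{ij}).
\]
Differentiability in $\bx$ and $\by$ is then immediate, since $\tanh$ is $C^\infty$ and finite sums and products of smooth functions are smooth; this disposes of the ``differentiable'' half of the claim without further work.

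For the limit I would invoke the elementary pointwise fact $\lim_{\alpha\to+\infty} \tanh(\alpha z) = \operatorname{sgn}(z)$, with the convention $\operatorname{sgn}(0)=0$ (which matches $\tanh(0)=0$ for every $\alpha$). Since the double sum is a finite sum of continuous functions of $\alpha$, I can exchange the limit with the summation and the product, yielding
\[
\lim_{\alpha\to+\infty}\Tilde{\tau}_\alpha(\bx,\by) = \frac{1}{N_0}\sum_{i=2}^{n}\sum_{j=1}^{i-1}\operatorname{sgn}(z^{\bx}_{ij})\operatorname{sgn}(z^{\by}_{ij}).
\]

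The final step is a case analysis per pair $(i,j)$: the product $\operatorname{sgn}(z^{\bx}_{ij})\operatorname{sgn}(z^{\by}_{ij})$ equals $+1$ exactly on concordant pairs (both differences strictly positive or both strictly negative), equals $-1$ exactly on discordant pairs, and equals $0$ whenever $x_i=x_j$ or $y_i=y_j$. Summing these contributions gives $N_{\text{con}} - N_{\text{dis}}$, and dividing by $N_0 = \tfrac{n(n-1)}{2} = N_{\text{total}}$ reproduces $\tau(\bx,\by)$ as defined in Eq.~(\ref{eq2}), finishing the proof.

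The proof is essentially mechanical once the $\tanh$ rewriting is spotted; the only mildly delicate point I expect is the treatment of tied pairs, because the statement only asserts pointwise convergence rather than uniform convergence in $(\bx,\by)$. At a tie the approximant is exactly $0$ for every $\alpha$, which coincides with the $\operatorname{sgn}(0)=0$ convention, so no limiting ambiguity arises; I would simply flag this convention explicitly to ensure the classification step is airtight.
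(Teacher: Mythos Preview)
Your proposal is correct and follows essentially the same route as the paper: both recognize each factor as $\tanh(\alpha z)\to\operatorname{sgn}(z)$, pass the limit through the finite sum, and classify each pair as concordant ($+1$) or discordant ($-1$) to recover $(N_{\text{con}}-N_{\text{dis}})/N_0$. Your version is in fact slightly more complete, since you explicitly justify differentiability via smoothness of $\tanh$ and handle tied pairs with the $\operatorname{sgn}(0)=0$ convention, two points the paper's proof leaves implicit.
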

 
Please refer to the appendix for the proof.
The main idea of the proof involves using a sigmoid to approximate the $\operatorname{sgn}$ function in the $\operatorname{sgn}$ expression of Kendall's rank correlation where $\tau(\bx,\by)=\frac{2}{n(n-1)}\sum_{i<j}\operatorname{sgn}(x_i-x_j)\operatorname{sgn}(y_i-y_j)$.

\subsection{Integrating Differentiable Kendall's Rank Correlation with Meta-Baseline}
By approximating Kendall's rank correlation using Eq.~\eqref{dif_k}, we are able to directly optimize Kendall's rank correlation in training, addressing the issue of non-differentiability in rank computation. 
This implies that our method can integrate with numerous existing approaches that rely on geometric similarity, by \emph{replacing geometric similarity with differentiable Kendall's rank correlation} in episodic training, thereby achieving further improvements.

A straightforward application is the integration of our method with Meta-Baseline \cite{metabaseline}, which is a simple and widely-adopted baseline in few-shot learning. In Meta-Baseline, cosine similarity $\cos(\bx,\by)$ is employed as the similarity measure $\text{sim}(\cdot )$ in Eq.~\eqref{eq1}, to determine the semantic similarity between query samples' embedding and prototypes in episodic training and testing. 
Hence, we replace the cosine similarity originally utilized in Meta-Baseline with Kendall's rank correlation by employing differentiable Kendall's rank correlation during the meta-training phase and adopting Kendall's rank correlation during the testing phase.
The outline of calculating the episodic-training loss with differentiable Kendall's rank correlation $\Tilde{\tau}_a(\bx,\by)$ is demonstrated in Algorithm \ref{algor}.

\begin{algorithm}[t]
    \caption{Episodic training with differentiable Kendall's rank correlation $\Tilde{\tau}_a(\bx,\by)$}
    \label{algor}
    \LinesNumbered
    \KwIn{Base class dataset $\Dbase=\{(x_i,y_i)| i=1,...,N \}$.}
    \KwOut{The episodic-traing loss $\mathcal{L}$.}
    Randomly sample $n$ categories from base class dataset $\Dbase$\;

    Randomly sample $m_s$ samples in each category to build the support set $\mathcal{S}$\;
    Randomly sample $m_q$ samples in each category to build the query set $\mathcal{Q}$\;
    Compute class prototypes: $c_k=\frac{1}{|\mathcal{S}_k| } \sum\limits_{(x,y)\in \mathcal{S}_k } f_{\theta }(x)$. $\mathcal{S}_k$ denotes the subset of $\mathcal{S}$ where $y = k$\;  
    Compute the episodic-training loss: $\mathcal{L} = -\frac{1}{|\mathcal{Q} |} \sum\limits_{(x,y)\in Q}\log p(y|x)$. $p(y|x)$ is obtained by Eq. \eqref{eq1}, where $\Tilde{\tau}_a(\bx,\by)$ is used as the similarity measure $\text{sim}(\cdot )$.
\end{algorithm}

\subsection{Results}
\textbf{Settings.} We conduct extensive experiments on mini-ImageNet \cite{matching} and tiered-ImageNet \cite{tiered} for performance evaluation, both of which are widely used in previous studies. We use ResNet-12 as the backbone network and first pre-train the feature extractor on the base dataset.
$\alpha$ in Eq.~\eqref{dif_k} is set to 0.5 for the differentiable approximation of Kendall rank correlation.
The performance evaluation is conducted on randomly sampled tasks from the test set, where the average accuracy and the $95\%$ confidence interval are reported.

\begin{table}[t]
\renewcommand{\arraystretch}{1.1}
\setlength{\tabcolsep}{8pt}
\centering
\caption{Comparison studies on \textbf{mini-ImageNet} and \textbf{tiered-ImageNet}. The average accuracy ($\%$) with $95\%$ confidence interval of the 5-way 1-shot setting and the 5-way 5-shot setting is reported.}
\label{train compare mini}
\resizebox{0.85\linewidth}{!}{\begin{tabular}{c|l|c|cc}
\toprule
\textbf{Dataset} &\textbf{Method}              & \textbf{Backbone} & \textbf{5-way 1-shot} & \textbf{5-way 5-shot} \\ \midrule[\heavyrulewidth]
\multirow{16}*{\rotatebox{90}{mini-ImageNet}}&ProtoNet \cite{protonet}                    & Conv-4            & $49.42 \pm 0.78$               & $68.20 \pm  0.66$                 \\
&MatchingNet \cite{matching}                 & Conv-4            & $43.56 \pm 0.84$                & $55.31 \pm  0.73$                  \\
&MAML \cite{maml}                       & Conv-4            & $48.70 \pm  1.84$                 & $63.11 \pm  0.92$                 \\ 
&GCR \cite{gcr}                       & Conv-4            & $53.21 \pm 0.40$                 & $ 72.34 \pm 0.32$                 \\ 
&SNAIL \cite{snail}                        & ResNet-12         & $55.71 \pm  0.99$                  & $68.88 \pm  0.92$                 \\
&AdaResNet \cite{adaresnet}                   & ResNet-12         & $56.88 \pm  0.62$                 & $71.94 \pm  0.57$                 \\
&TADAM \cite{tadam}                       & ResNet-12         & $58.50 \pm 0.30$                 & $76.70 \pm 0.30$                 \\
&MTL \cite{mtl}                        & ResNet-12         & $61.20 \pm 1.80$                 & $75.50 \pm  0.80$                 \\
&MetaOptNet \cite{metaopt}                  & ResNet-12         & $62.64 \pm 0.61$                 & $78.63 \pm 0.46$                 \\
&TapNet \cite{tapnet}                  & ResNet-12         & $61.65 \pm 0.15$                 & $76.36 \pm 0.10$                 \\
&CAN \cite{can}                         & ResNet-12         & $63.85 \pm 0.48$                 & $79.44 \pm 0.34$                 \\
&ProtoNet + TRAML \cite{traml}            & ResNet-12         & $60.31 \pm 0.48$                 & $77.94 \pm 0.57$                 \\
&SLA-AG \cite{sla-ag}                         & ResNet-12         & $62.93 \pm 0.63$                 & $79.63 \pm 0.47$                 \\ 
&ConstellationNet \cite{constell}                         & ResNet-12        & $64.89 \pm 0.23$                 & $79.95 \pm 0.17$                 \\
\cmidrule{2-5}
&Meta-Baseline \cite{metabaseline}               & ResNet-12         & $63.17 \pm 0.23$                 & $79.26 \pm 0.17$                 \\
&Meta-Baseline + DiffKendall (Ours) & ResNet-12         & \textbf{65.56 $\pm$ 0.43}         & \textbf{80.79 $\pm$ 0.31}         \\ 
\midrule[\heavyrulewidth]

\multirow{7}*{\rotatebox{90}{tiered-ImageNet~}}
&ProtoNet \cite{protonet}                    & Conv-4            & $53.31 \pm 0.89$                 & $72.69 \pm 0.74$                \\
&Relation Networks \cite{relationnet}           & Conv-4            & $54.48 \pm 0.93$                  & $71.32 \pm 0.78$                  \\
&MAML \cite{maml}                         & Conv-4            & $51.67 \pm 1.81$                  & $70.30 \pm 1.75$                 \\
&MetaOptNet \cite{metaopt}                   & ResNet-12         & $65.99 \pm 0.72$         & $81.56 \pm 0.53$         \\
&CAN \cite{can}                          & ResNet-12         & $69.89 \pm 0.51$          & $84.23 \pm 0.37$          \\ 
\cmidrule{2-5}
&Meta-Baseline \cite{metabaseline}                & ResNet-12         & $68.62 \pm 0.27$                 & $83.74 \pm 0.18$                  \\
&Meta-Baseline + DiffKendall (Ours) & ResNet-12         & \textbf{70.76 $\pm$ 0.43} & \textbf{85.31 $\pm$ 0.34}        \\ 
\bottomrule
\end{tabular}}
\end{table}

\textbf{Comparison Studies.} Table \ref{train compare mini} shows the results of the comparison studies.
It can be seen that on both the datasets, compared with the original Meta-Baseline that uses cosine similarity in episodic training, we achieve a clear improvement by replacing cosine similarity with the proposed differentiable Kendall's rank correlation, with $2.39\%$, $2.16\%$ in the 1-shot and $1.53\%$, $1.57\%$ in the 5-shot, respectively. Moreover, our method also outperforms methods like CAN \cite{can} and ConstellationNet \cite{constell}, where cross-attention and self-attention blocks are used. It is worth noting that there are no additional architectures or learnable parameters introduced in our method, just like the original Meta-Baseline.

Furthermore, we also conduct a comprehensive comparison to demonstrate the role of incorporating differentiable Kendall's rank correlation during episodic training.
The results are presented in Figure \ref{fig:comparison}.
Compared with solely adopting Kendall's rank correlation at test time, it can be observed that leveraging the differentiable Kendall's rank correlation in episodic training leads to a $1\%$-$2\%$ improvement under 5-way 1-shot on test sets with varying domain discrepancies. This clearly demonstrates that the proposed differentiable Kendall's rank correlation can effectively serve as a soft approximation to directly optimize ranking consistency for further performance improvements in few-shot learning.

\subsection{Analysis} \label{analysis}
\textbf{Channel-Wise Ablation Studies: A closer look at the performance improvements.} We aim to carry out an in-depth analysis to uncover the underlying reasons behind the performance gains observed in few-shot learning upon utilizing Kendall's rank correlation. 
By determining semantic similarities between features with the correlation of channel importance ranking, we can effectively distinguish the role and contribution of small-valued channels that overwhelmingly occupy the feature space of novel samples for classification. 
As a result, these previously neglected small-valued channels can be fully harnessed to enhance classification performance. 
To validate this, we propose a channel-wise ablation study in which we test the performance of models on few-shot tasks using the small-valued and large-valued channels separately, allowing for a more detailed and nuanced understanding of their respective roles in classification. 
Concretely, given an $n$-dimensional feature $\bm{x} = (x_1,...,x_n)$, we define two types of masks $\bm{l} = (l_1,...,l_n)$, $\bm{h} = (h_1,...,h_n)$, as follows,
\begin{align*}
l_i=\begin{cases}
  0& \text{ if } x_i<L_0 \\
  1& \text{ else }
\end{cases}
\quad
h_i=\begin{cases}
  0& \text{ if } x_i>H_0 \\
  1& \text{ else }
\end{cases}
\end{align*}
The masked feature is then calculated as $\bar{\bm{x}} = \bm{x}\odot  \bm{l}$ or $\bar{\bm{x}} = \bm{x}\odot  \bm{h}$, where $\odot $ denotes Hadamard Product. 
We selectively preserve channels with large values in mask $\bm{l}$ while masking out small-valued channels. 
Conversely, we exclude channels with large values in mask $\bm{h}$ to exclusively evaluate the performance of small-valued channels on classification. 
Subsequently, we utilize the masked embedding to compare the performance of Kendall's rank correlation and cosine similarity in few-shot learning under various settings of threshold $L_0$ and $H_0$ with the corresponding results illustrated in Figure \ref{fig:channel-wise ablation}.

\begin{figure}[tbp]
    \begin{minipage}[t]{0.5\linewidth}
        \centering
        \includegraphics[width=2.6in]{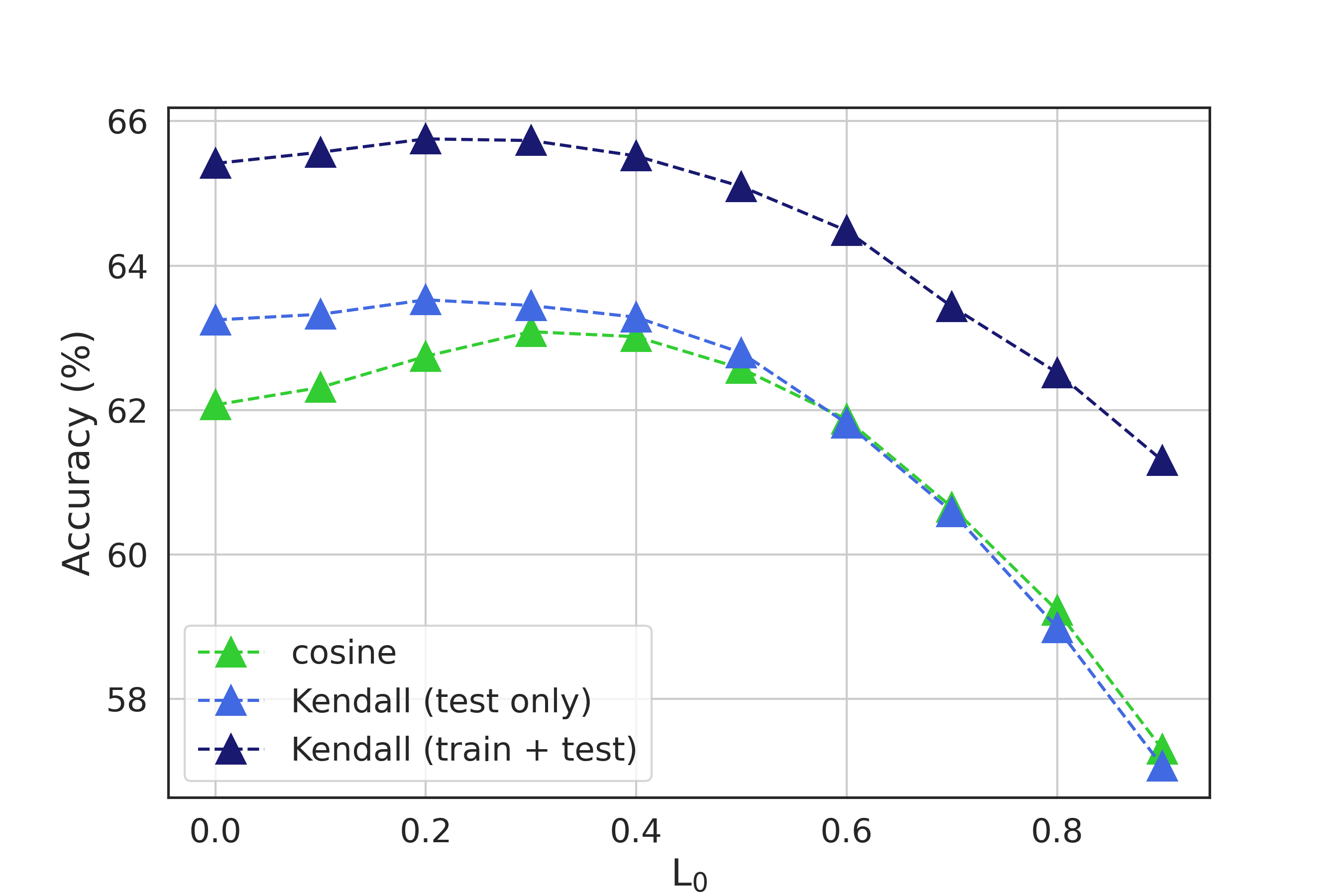}
        \centerline{(a)}
    \end{minipage}%
        \begin{minipage}[t]{0.5\linewidth}
        \centering
        \includegraphics[width=2.6in]{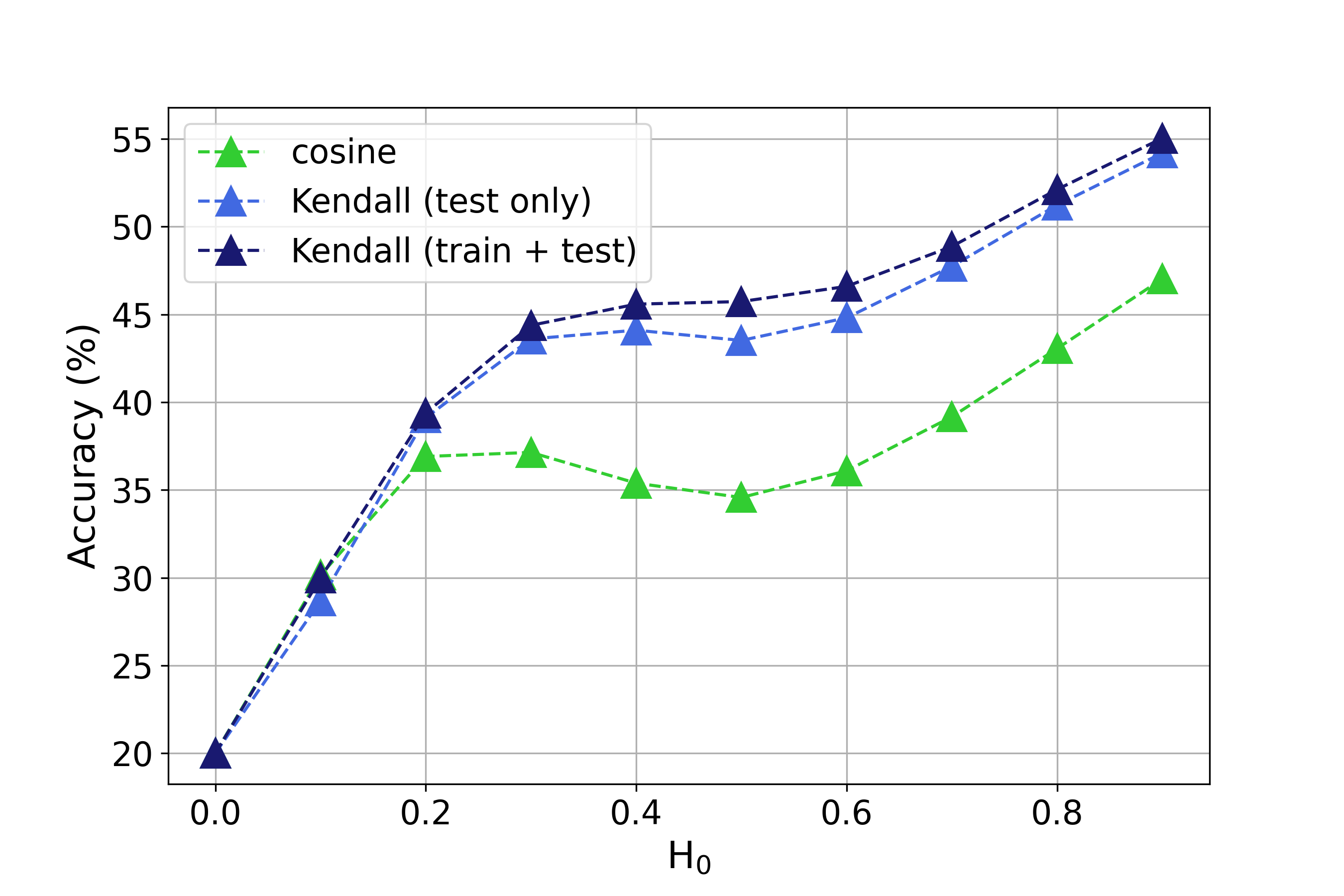}
        \centerline{(b)}
    \end{minipage}%
    \caption{Average accuracy of 5-way 1-shot tasks on the test-set of mini-ImageNet using the masked features. (a) Channels with values less than $L_0$ are masked out (b) Channels with values larger than $H_0$ are masked out. %
    }
\label{fig:channel-wise ablation} 
\end{figure}

\begin{figure}[tbp]
        \begin{minipage}[t]{0.48\linewidth}
        \centering
      \includegraphics[width=2.6in]{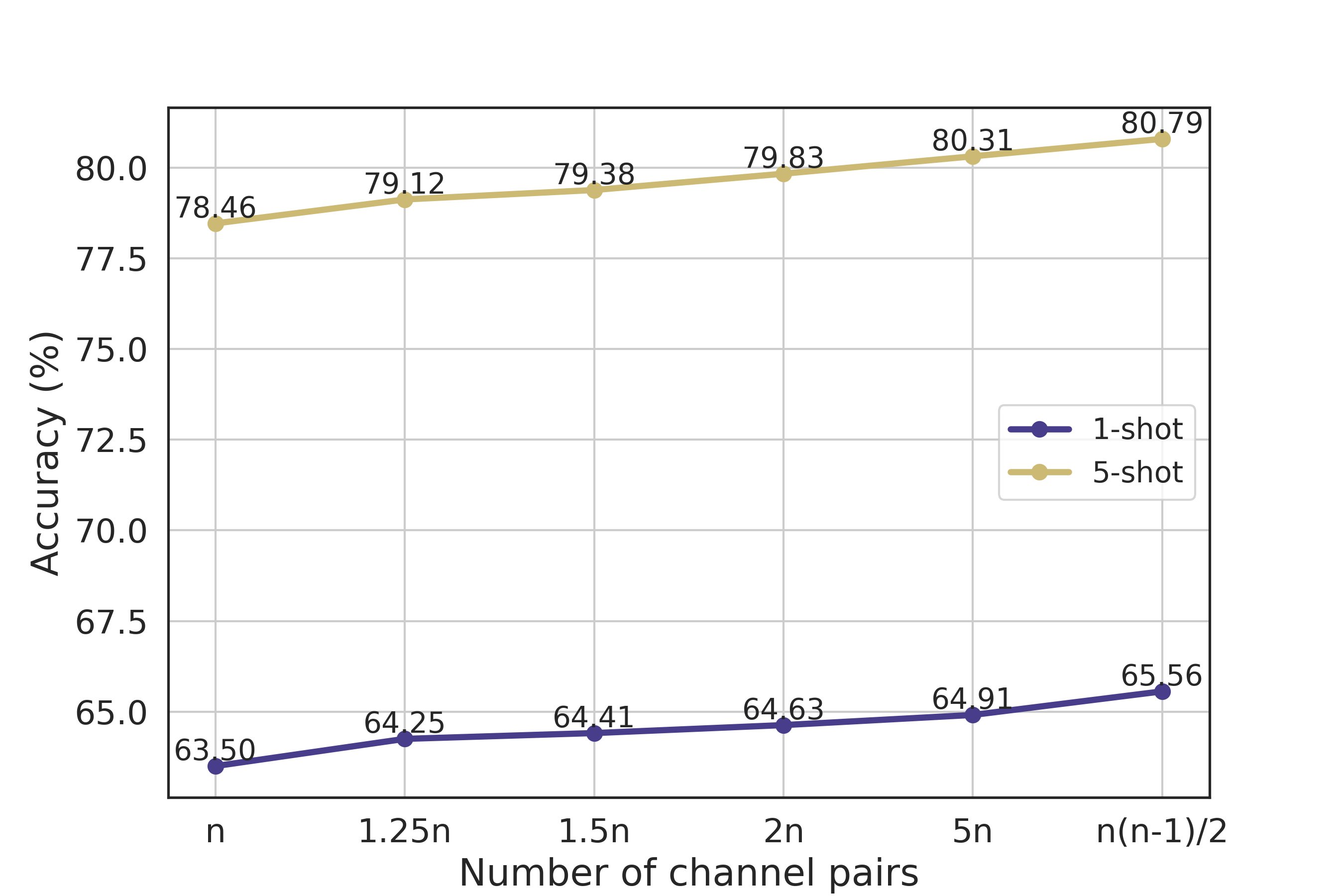}
    \caption{
    Performance of the linear time complexity calculation method for Kendall's rank correlation on mini-ImageNet ($n = 640$).}
    \label{fig:linear time} 
    \end{minipage}%
\hspace{.15in}
        \begin{minipage}[t]{0.48\linewidth}
        \centering
        \includegraphics[width=2.6in]{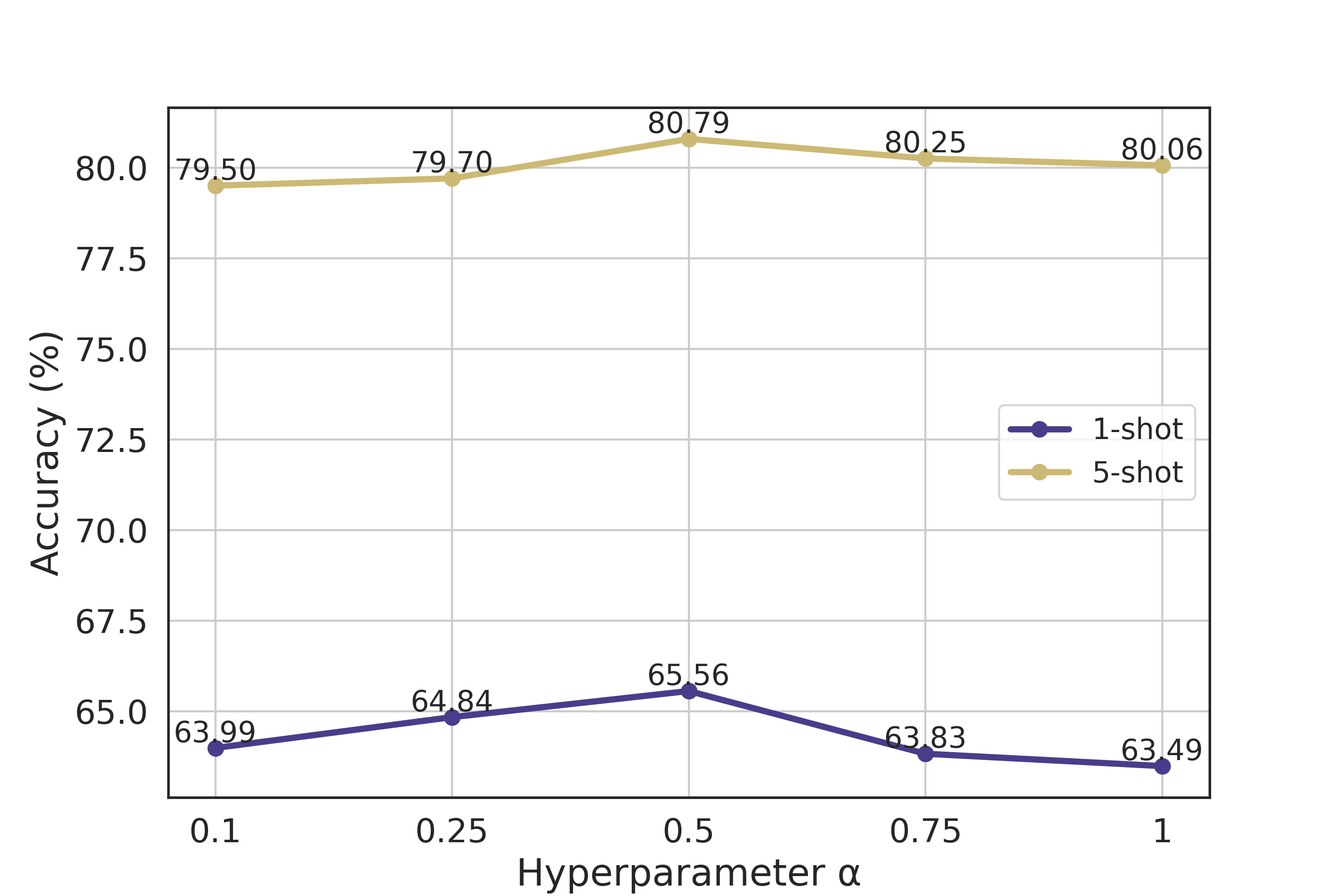}
            \caption{Ablation studies of the hyperparameter $\alpha$ in Eq. \eqref{dif_k} on mini-ImageNet.}
    \label{fig:ablation a} 
    \end{minipage}%
\end{figure}

Figure \ref{fig:channel-wise ablation}(a) shows that when small-valued channels are masked during testing, both cosine similarity and Kendall's rank correlation achieve similar performance, but significant improvements are observed by utilizing differentiable Kendall's rank correlation for episodic training. 
As small-valued channels are gradually unmasked, Kendall's rank correlation significantly outperforms cosine similarity. 
This demonstrates that the improvement in performance achieved by utilizing Kendall's rank correlation is due to a more effective utilization of small-valued channels in novel sample features. 
This effect is further reflected in Figure \ref{fig:channel-wise ablation}(b), where masking only large-valued channels and utilizing only small-valued channels for classification results in a substantial improvement of approximately $9\%$ in performance using Kendall's rank correlation compared to cosine similarity.

\textbf{Calculating Kendall's Rank Correlation within Linear Time Complexity.} 
Kendall's rank correlation requires us to compute the importance ranking concordance of any pair of channels. 
This results in a higher time complexity compared to cosine similarity, increasing quadratically with the total number of channels. 
We investigate whether this time complexity could be further reduced to improve the computational efficiency of Kendall's rank correlation at test time. 
Specifically, we propose a simple approach to calculate the importance ranking concordance by randomly sampling a subset of channel pairs instead of using all channel pairs. 
The experimental results are presented in Figure \ref{fig:linear time}, where $n$ represents the total number of channels in the features of novel samples. 
It can be observed that by randomly sampling $5n$ channel pairs, we achieve a performance that is very close to using all channel pairs. 
It should be noted that this performance has already surpassed that of the original Meta-Baseline method while the time complexity is maintained linear, equivalent to cosine similarity. 

\textbf{Hyperparameter Sensitivity.} 
We also investigate the impact of the hyperparameter $\alpha$ in Eq. \eqref{dif_k}, and the experimental results are presented in Figure \ref{fig:ablation a}. 
The best results are obtained around a value of $0.5$, and the performance is found to be relatively insensitive to variations of $\alpha$ within a certain range. 
Setting a value for $\alpha$ that is too large or too small may lead to a decrease in performance. 
When a value for $\alpha$ is too large, the model may overfit to the base classes during episodic training, which can result in decreased generalization performance on novel classes. 
Conversely, if a value that is too small is used, this may lead to a poor approximation of Kendall's rank correlation.
\subsection{Visualization}
\begin{figure}[t]
\centering
        \includegraphics[width=0.85\textwidth]{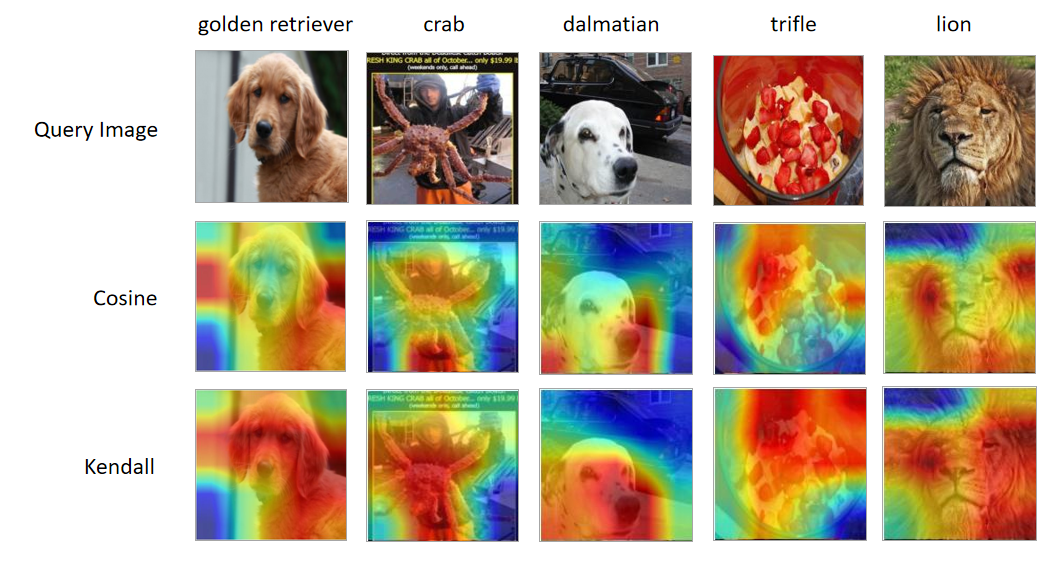}
    \caption{The results of the visual analysis on the test set of mini-ImageNet with cosine similarity and Kendall's Rank Correlation respectively. It can be seen that Kendall's rank correlation demonstrates a more accurate localization of salient targets and superior differentiation of key features.}
\label{fig:visual1} \end{figure}
\begin{figure}[t]
\centering
        \includegraphics[width=0.9\textwidth]{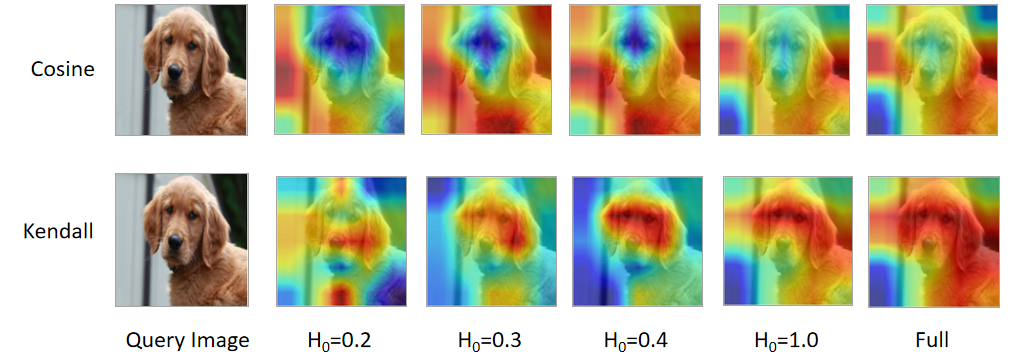}
    \caption{The results of the channel ablation visualization experiments, where channels with values greater than $H_0$ are masked out, and `Full' indicates when all channels are used. It can be seen that the key features of the category are held on the small-valued channels and are successfully uncovered by Kendall's rank correlation, while cosine similarity misses these critical features.}
\label{fig:visual2} \end{figure}
Further visual analysis is demonstrated in Figure~\ref{fig:visual1}. 
Specifically, we employ Kendall's rank correlation and cosine similarity to visualize the feature maps of the query samples. 
By computing the semantic similarity between the class prototype and each local feature of the query samples, the regions wherein the salient targets are located on the feature maps are illustrated. 
Hence, we can observe whether the significant features in the query samples are accurately detected. 
It is evident that the utilization of Kendall's rank correlation results in a more precise localization of the distinctive regions within the query sample. 

Furthermore, we conduct in-depth visual experiments involving channel ablation. 
We mask the channels with values greater than $H_0$ in the features of both the class prototype and query sample, just like the channel-wise ablation experiments in Section~\ref{analysis}.
The results are shown in Figure~\ref{fig:visual2}, from which we can observe that Kendall's rank correlation captures the discriminative features in the query sample when only utilizing the small-valued channels. 
In contrast, cosine similarity ignores these critical features, resulting in an inability to correctly locate salient regions when all channels are used. 
Therefore, we can infer that the small-valued channels that occupy the majority of the features indeed play a vital role in few-shot learning. 
This also explicitly demonstrates that the improvement achieved by Kendall's rank correlation in few-shot learning is essentially due to its ability to fully exploit the small-valued channels in features.
\section{Conclusion}
This paper exposes a key property of the features of novel samples in few-shot learning, resulting from the fact that values on most channels are small and closely distributed, making it arduous to distinguish their importance in classification. 
To overcome this, we propose to replace the commonly used geometric similarity metric with the correlation of the channel importance ranking to determine semantic similarities.
Our method can integrate
with numerous existing few-shot approaches without increasing training costs and has the potential to integrate with future
state-of-the-art methods that rely on geometric similarity metrics to achieve additional improvement.

\section*{Acknowledgment}
This work is supported by Microsoft Research Asia.

We would like to express our sincere gratitude to the reviewers of NeurIPS 2023 for their insightful and
constructive feedback. Their valuable comments have greatly contributed to improving the quality of
our work.

{\small\bibliography{sections/ref}

\begin{thebibliography}{34}
\providecommand{\natexlab}[1]{#1}
\providecommand{\url}[1]{\texttt{#1}}
\expandafter\ifx\csname urlstyle\endcsname\relax
  \providecommand{\doi}[1]{doi: #1}\else
  \providecommand{\doi}{doi: \begingroup \urlstyle{rm}\Url}\fi

\bibitem[Snell et~al.(2017)Snell, Swersky, and Zemel]{protonet}
Jake Snell, Kevin Swersky, and Richard~S. Zemel.
\newblock Prototypical networks for few-shot learning.
\newblock In \emph{NeurIPS}, 2017.

\bibitem[Sung et~al.(2018)Sung, Yang, Zhang, Xiang, Torr, and
  Hospedales]{relationnet}
Flood Sung, Yongxin Yang, Li~Zhang, Tao Xiang, Philip H.~S. Torr, and
  Timothy~M. Hospedales.
\newblock Learning to compare: Relation network for few-shot learning.
\newblock In \emph{CVPR}, 2018.

\bibitem[Hou et~al.(2019)Hou, Chang, Ma, Shan, and Chen]{can}
Ruibing Hou, Hong Chang, Bingpeng Ma, Shiguang Shan, and Xilin Chen.
\newblock Cross attention network for few-shot classification.
\newblock In \emph{NeurIPS}, 2019.

\bibitem[Chen et~al.(2021)Chen, Liu, Xu, Darrell, and Wang]{metabaseline}
Yinbo Chen, Zhuang Liu, Huijuan Xu, Trevor Darrell, and Xiaolong Wang.
\newblock Meta-baseline: Exploring simple meta-learning for few-shot learning.
\newblock In \emph{ICCV}, 2021.

\bibitem[Mangla et~al.(2020)Mangla, Singh, Sinha, Kumari, Balasubramanian, and
  Krishnamurthy]{s2m2}
Puneet Mangla, Mayank Singh, Abhishek Sinha, Nupur Kumari, Vineeth~N.
  Balasubramanian, and Balaji Krishnamurthy.
\newblock Charting the right manifold: Manifold mixup for few-shot learning.
\newblock In \emph{WACV}, 2020.

\bibitem[Xu et~al.(2021)Xu, Xu, Wang, and Tu]{constell}
Weijian Xu, Yifan Xu, Huaijin Wang, and Zhuowen Tu.
\newblock Attentional constellation nets for few-shot learning.
\newblock In \emph{ICLR}, 2021.

\bibitem[Zhang et~al.(2021)Zhang, Li, Ye, Huang, and
  Zhang]{prototypecompletion}
Baoquan Zhang, Xutao Li, Yunming Ye, Zhichao Huang, and Lisai Zhang.
\newblock Prototype completion with primitive knowledge for few-shot learning.
\newblock In \emph{CVPR}, 2021.

\bibitem[Chen et~al.(2019)Chen, Liu, Kira, Wang, and Huang]{Chen}
Wei{-}Yu Chen, Yen{-}Cheng Liu, Zsolt Kira, Yu{-}Chiang~Frank Wang, and
  Jia{-}Bin Huang.
\newblock A closer look at few-shot classification.
\newblock In \emph{ICLR}, 2019.

\bibitem[Dhillon et~al.(2020)Dhillon, Chaudhari, Ravichandran, and
  Soatto]{baseline20}
Guneet~Singh Dhillon, Pratik Chaudhari, Avinash Ravichandran, and Stefano
  Soatto.
\newblock A baseline for few-shot image classification.
\newblock In \emph{ICLR}, 2020.

\bibitem[Wang et~al.(2019)Wang, Chao, Weinberger, and van~der
  Maaten]{simple_shot}
Yan Wang, Wei{-}Lun Chao, Kilian~Q. Weinberger, and Laurens van~der Maaten.
\newblock Simpleshot: Revisiting nearest-neighbor classification for few-shot
  learning.
\newblock \emph{arxiv:1911.04623}, 2019.

\bibitem[Finn et~al.(2017)Finn, Abbeel, and Levine]{maml}
Chelsea Finn, Pieter Abbeel, and Sergey Levine.
\newblock Model-agnostic meta-learning for fast adaptation of deep networks.
\newblock In \emph{ICML}, 2017.

\bibitem[Lee et~al.(2019)Lee, Maji, Ravichandran, and Soatto]{metaopt}
Kwonjoon Lee, Subhransu Maji, Avinash Ravichandran, and Stefano Soatto.
\newblock Meta-learning with differentiable convex optimization.
\newblock In \emph{CVPR}, 2019.

\bibitem[Rusu et~al.(2019)Rusu, Rao, Sygnowski, Vinyals, Pascanu, Osindero, and
  Hadsell]{leo}
Andrei~A. Rusu, Dushyant Rao, Jakub Sygnowski, Oriol Vinyals, Razvan Pascanu,
  Simon Osindero, and Raia Hadsell.
\newblock Meta-learning with latent embedding optimization.
\newblock In \emph{ICLR}, 2019.

\bibitem[Oreshkin et~al.(2018)Oreshkin, L{\'{o}}pez, and Lacoste]{tadam}
Boris~N. Oreshkin, Pau~Rodr{\'{\i}}guez L{\'{o}}pez, and Alexandre Lacoste.
\newblock {TADAM:} task dependent adaptive metric for improved few-shot
  learning.
\newblock In \emph{NeurIPS}, 2018.

\bibitem[Li et~al.(2019{\natexlab{a}})Li, Wang, Xu, Huo, Gao, and Luo]{knn}
Wenbin Li, Lei Wang, Jinglin Xu, Jing Huo, Yang Gao, and Jiebo Luo.
\newblock Revisiting local descriptor based image-to-class measure for few-shot
  learning.
\newblock In \emph{CVPR}, 2019{\natexlab{a}}.

\bibitem[Xue and Wang(2020)]{bias1}
Wanqi Xue and Wei Wang.
\newblock One-shot image classification by learning to restore prototypes.
\newblock In \emph{AAAI}, 2020.

\bibitem[Afrasiyabi et~al.(2020)Afrasiyabi, Lalonde, and
  Gagn{\'{e}}]{associate}
Arman Afrasiyabi, Jean{-}Fran{\c{c}}ois Lalonde, and Christian Gagn{\'{e}}.
\newblock Associative alignment for few-shot image classification.
\newblock In \emph{ECCV}, 2020.

\bibitem[Das et~al.(2021)Das, Wang, and Moura]{distractoriccv}
Rajshekhar Das, Yu-Xiong Wang, and Jos\'e M.~F. Moura.
\newblock On the importance of distractors for few-shot classification.
\newblock In \emph{ICCV}, 2021.

\bibitem[Le et~al.(2021)Le, Nguyen, Nguyen, Tran, Nguyen, and
  Hua]{distractornips}
Duong~Hoang Le, Khoi~Duc Nguyen, Khoi Nguyen, Quoc-Huy Tran, Rang Nguyen, and
  Binh-Son Hua.
\newblock {POODLE}: Improving few-shot learning via penalizing
  out-of-distribution samples.
\newblock In \emph{NeurIPS}, 2021.

\bibitem[Luo et~al.(2022)Luo, Xu, and Xu]{channel_imp}
Xu~Luo, Jing Xu, and Zenglin Xu.
\newblock Channel importance matters in few-shot image classification.
\newblock In \emph{ICML}, 2022.

\bibitem[Vinyals et~al.(2016)Vinyals, Blundell, Lillicrap, Kavukcuoglu, and
  Wierstra]{matching}
Oriol Vinyals, Charles Blundell, Tim Lillicrap, Koray Kavukcuoglu, and Daan
  Wierstra.
\newblock Matching networks for one shot learning.
\newblock In \emph{NeurIPS}, 2016.

\bibitem[Wah et~al.(2011)Wah, Branson, Welinder, Perona, and Belongie]{cub}
C.~Wah, S.~Branson, P.~Welinder, P.~Perona, and S.~Belongie.
\newblock The caltech-ucsd birds-200-2011 dataset.
\newblock Technical Report CNS-TR-2011-001, California Institute of Technology,
  2011.

\bibitem[Houben et~al.(2013)Houben, Stallkamp, Salmen, Schlipsing, and
  Igel]{traffic_signs}
Sebastian Houben, Johannes Stallkamp, Jan Salmen, Marc Schlipsing, and
  Christian Igel.
\newblock Detection of traffic signs in real-world images: The german traffic
  sign detection benchmark.
\newblock In \emph{IJCNN}, 2013.

\bibitem[Nilsback and Zisserman(2008)]{vgg_flower}
Maria{-}Elena Nilsback and Andrew Zisserman.
\newblock Automated flower classification over a large number of classes.
\newblock In \emph{ICVGIP}, 2008.

\bibitem[Jongejan et~al.(2016)Jongejan, Rowley, Kawashima, Kim, and
  Fox-Gieg]{quick_draw}
Jonas Jongejan, Henry Rowley, Takashi Kawashima, Jongmin Kim, and Nick
  Fox-Gieg.
\newblock The quick, draw! – a.i. experiment.
\newblock \url{quickdraw.withgoogle.com}, 2016.

\bibitem[Schroeder and Cui(2018)]{fungi}
Brigit Schroeder and Yin Cui.
\newblock Fgvcx fungi classification challenge 2018.
\newblock \url{github.com/visipedia/fgvcx_fungi_comp}, 2018.

\bibitem[Ren et~al.(2018)Ren, Triantafillou, Ravi, Snell, Swersky, Tenenbaum,
  Larochelle, and Zemel]{tiered}
Mengye Ren, Eleni Triantafillou, Sachin Ravi, Jake Snell, Kevin Swersky,
  Joshua~B. Tenenbaum, Hugo Larochelle, and Richard~S. Zemel.
\newblock Meta-learning for semi-supervised few-shot classification.
\newblock In \emph{ICLR}, 2018.

\bibitem[Li et~al.(2019{\natexlab{b}})Li, Luo, Xiang, Huang, and Wang]{gcr}
Aoxue Li, Tiange Luo, Tao Xiang, Weiran Huang, and Liwei Wang.
\newblock Few-shot learning with global class representations.
\newblock In \emph{ICCV}, 2019{\natexlab{b}}.

\bibitem[Mishra et~al.(2018)Mishra, Rohaninejad, Chen, and Abbeel]{snail}
Nikhil Mishra, Mostafa Rohaninejad, Xi~Chen, and Pieter Abbeel.
\newblock A simple neural attentive meta-learner.
\newblock In \emph{ICLR}, 2018.

\bibitem[Munkhdalai et~al.(2018)Munkhdalai, Yuan, Mehri, and
  Trischler]{adaresnet}
Tsendsuren Munkhdalai, Xingdi Yuan, Soroush Mehri, and Adam Trischler.
\newblock Rapid adaptation with conditionally shifted neurons.
\newblock In \emph{ICML}, 2018.

\bibitem[Sun et~al.(2019)Sun, Liu, Chua, and Schiele]{mtl}
Qianru Sun, Yaoyao Liu, Tat{-}Seng Chua, and Bernt Schiele.
\newblock Meta-transfer learning for few-shot learning.
\newblock In \emph{CVPR}, 2019.

\bibitem[Yoon et~al.(2019)Yoon, Seo, and Moon]{tapnet}
Sung~Whan Yoon, Jun Seo, and Jaekyun Moon.
\newblock Tapnet: Neural network augmented with task-adaptive projection for
  few-shot learning.
\newblock In \emph{ICML}, 2019.

\bibitem[Li et~al.(2020)Li, Huang, Lan, Feng, Li, and Wang]{traml}
Aoxue Li, Weiran Huang, Xu~Lan, Jiashi Feng, Zhenguo Li, and Liwei Wang.
\newblock Boosting few-shot learning with adaptive margin loss.
\newblock In \emph{CVPR}, 2020.

\bibitem[Lee et~al.(2020)Lee, Hwang, and Shin]{sla-ag}
Hankook Lee, Sung~Ju Hwang, and Jinwoo Shin.
\newblock Self-supervised label augmentation via input transformations.
\newblock In \emph{ICML}, 2020.

\end{thebibliography}
\bibliographystyle{unsrtnat}}

\clearpage
\appendix
\begin{center}
    {\Large\bf Appendix} 
\end{center}

\section{Proof for Lemma 1}
\mylemma*

\begin{proof}
First, consider the scenario where channel pairs exhibit consistent importance ranking, specifically, either $x_i>x_j \wedge y_i>y_j$ or $x_i<x_j \wedge y_i<y_j$. In the case where $x_i > x_j \wedge y_i > y_j$, we obtain:
\begin{align}
\lim_{\alpha  \to +\infty } \frac{e^{\alpha(x_i-x_j)}-e^{-\alpha(x_i-x_j)}}{e^{\alpha(x_i-x_j)}+e^{-\alpha(x_i-x_j)}} = 1,
\quad
\lim_{\alpha  \to +\infty } \frac{e^{\alpha(y_i-y_j)}-e^{-\alpha(y_i-y_j)}}{e^{\alpha(y_i-y_j)}+e^{-\alpha(y_i-y_j)}} = 1. 
\nonumber 
\end{align}
On the other hand, if $x_i < x_j \wedge y_i < y_j$, we have:
\begin{align}
\lim_{\alpha  \to +\infty } \frac{e^{\alpha(x_i-x_j)}-e^{-\alpha(x_i-x_j)}}{e^{\alpha(x_i-x_j)}+e^{-\alpha(x_i-x_j)}} = -1,
\quad
\lim_{\alpha  \to +\infty } \frac{e^{\alpha(y_i-y_j)}-e^{-\alpha(y_i-y_j)}}{e^{\alpha(y_i-y_j)}+e^{-\alpha(y_i-y_j)}} = -1.
\nonumber 
\end{align}
Hence, when $x_i > x_j \wedge y_i > y_j$ or $x_i < x_j \wedge y_i < y_j$, the following conclusion holds:
\begin{align}
\lim_{\alpha  \to +\infty } \frac{e^{\alpha(x_i-x_j)}-e^{-\alpha(x_i-x_j)}}{e^{\alpha(x_i-x_j)}+e^{-\alpha(x_i-x_j)}}\frac{e^{\alpha(y_i-y_j)}-e^{-\alpha(y_i-y_j)}}{e^{\alpha(y_i-y_j)}+e^{-\alpha(y_i-y_j)}} = 1.
\label{proof_con}
\end{align}
Second, consider the scenario where channel pairs exhibit inconsistent importance ranking, that is, either $x_i>x_j \wedge y_i<y_j$ or $x_i<x_j \wedge y_i>y_j$. In the case where $x_i>x_j \wedge y_i<y_j$, we obtain:
\begin{align}
\lim_{\alpha  \to +\infty } \frac{e^{\alpha(x_i-x_j)}-e^{-\alpha(x_i-x_j)}}{e^{\alpha(x_i-x_j)}+e^{-\alpha(x_i-x_j)}} = 1,
\quad
\lim_{\alpha  \to +\infty } \frac{e^{\alpha(y_i-y_j)}-e^{-\alpha(y_i-y_j)}}{e^{\alpha(y_i-y_j)}+e^{-\alpha(y_i-y_j)}} = -1.
\nonumber 
\end{align}
On the other hand, if $x_i<x_j \wedge y_i>y_j$, we have:
\begin{align}
\lim_{\alpha  \to +\infty } \frac{e^{\alpha(x_i-x_j)}-e^{-\alpha(x_i-x_j)}}{e^{\alpha(x_i-x_j)}+e^{-\alpha(x_i-x_j)}} = -1,
\quad
\lim_{\alpha  \to +\infty } \frac{e^{\alpha(y_i-y_j)}-e^{-\alpha(y_i-y_j)}}{e^{\alpha(y_i-y_j)}+e^{-\alpha(y_i-y_j)}} = 1.
\nonumber 
\end{align}
Hence, when $x_i > x_j \wedge y_i < y_j$ or $x_i < x_j \wedge y_i > y_j$, the following conclusion holds:
\begin{align}
\lim_{\alpha  \to +\infty } \frac{e^{\alpha(x_i-x_j)}-e^{-\alpha(x_i-x_j)}}{e^{\alpha(x_i-x_j)}+e^{-\alpha(x_i-x_j)}}\frac{e^{\alpha(y_i-y_j)}-e^{-\alpha(y_i-y_j)}}{e^{\alpha(y_i-y_j)}+e^{-\alpha(y_i-y_j)}} = -1.
\label{proof_discon}
\end{align}
When considering all channel pairs, combining Eq. \eqref{proof_con} and Eq. \eqref{proof_discon}, it is evident that:
\begin{align}
\lim_{\alpha  \to +\infty }\Tilde{\tau}_\alpha(\bx,\by) &=\lim_{\alpha  \to +\infty }\frac{1}{N_0} \sum_{i=2}^{n}\sum_{j=1}^{i-1}\frac{e^{\alpha(x_i-x_j)}-e^{-\alpha(x_i-x_j)}}{e^{\alpha(x_i-x_j)}+e^{-\alpha(x_i-x_j)}} \frac{e^{\alpha(y_i-y_j)}-e^{-\alpha(y_i-y_j)}}{e^{\alpha(y_i-y_j)}+e^{-\alpha(y_i-y_j)}}\nonumber \\
&=\frac{N_{\text{con}}-N_{\text{dis}}}{N_0}\nonumber \\
&=\tau (\bx, \by),
\nonumber 
\end{align}
where $N_{\text{con}}$ represents the total count of channel pairs with consistent importance ranking, $N_{\text{dis}}$ represents the count of channel pairs with inconsistent importance ranking.
\end{proof}

\end{document}